\documentclass[letterpaper, 10 pt, conference]{ieeeconf}
\usepackage{graphicx}
\usepackage{amsmath, amssymb, mathtools}
\usepackage{glossaries}
\usepackage{algorithm}
\usepackage[algo2e]{algorithm2e}
\usepackage{algorithmic}
\usepackage{booktabs}
\usepackage{caption}
\usepackage{subcaption}
\usepackage{balance}
\usepackage{hyperref}
\usepackage{comment}

\usepackage{amsthm}

\IEEEoverridecommandlockouts                         
\title{\LARGE \bf
Graph Neural Model Predictive Control for High-Dimensional Systems
}

\author{
    Patrick Benito Eberhard$^{1}$, 
    Luis Pabon$^{2}$, 
    Daniele Gammelli$^{2}$, 
    Hugo Buurmeijer$^{2}$,\\
    Amon Lahr$^{1}$,
    Mark Leone$^{2}$,
    Andrea Carron$^{1}$,
    Marco Pavone$^{2,3}$
    \thanks{$^{1}$Institute for Dynamic Systems and Control, ETH Zürich, Zürich CH-8092, Switzerland (e-mail: \{peberhard, amlahr, carrona\}@ethz.ch).}%
    \thanks{$^{2}$Department of Aeronautics and Astronautics, Stanford University, Stanford, CA (e-mail: \{lpabon, gammelli, hbuurmei, mleone, pavone\}@stanford.edu).}%
    \thanks{$^{3}$NVIDIA Research}
}

\newacronym{mpc}{MPC}{Model Predictive Control}
\newacronym{ocp}{OCP}{Optimal Control Problem}
\newacronym[plural=GNNs, firstplural=Graph Neural Networks (GNNs)]{gnn}{GNN}{Graph Neural Network}
\newacronym{qp}{QP}{Quadratic Program}
\newacronym{dof}{DoF}{Degrees of Freedom}
\newacronym{fem}{FEM}{Finite Element Method}
\newacronym[plural=SSMs, firstplural=Spectral Submanifolds (SSMs)]{ssm}{SSM}{Spectral Submanifold}
\newacronym{admm}{ADMM}{Alternating Direction Method of Multipliers}
\newacronym{gpu}{GPU}{Graphics Processing Unit}
\newacronym{mlp}{MLP}{Multi-Layer Perceptron}
\newacronym[plural=CNNs, firstplural=Convolutional Neural Networks (CNNs)]{cnn}{CNN}{Convolutional Neural Network}
\newacronym{relu}{ReLU}{Rectified Linear Unit}
\newacronym{ipm}{IPM}{Interior Point Method}
\newacronym{kkt}{KKT}{Karush-Kuhn-Tucker}
\newacronym{licq}{LICQ}{Linear Independence Constraint Qualification}
\newacronym{sqp}{SQP}{Sequential Quadratic Programming}
\newacronym[plural=FLOPs, firstplural=Floating Point Operations (FLOPs)]{flop}{FLOP}{Floating Point Operation}
\newacronym{cpu}{CPU}{Central Processing Unit}
\newacronym{jit}{JIT}{Just-In-Time}
\newacronym{xla}{XLA}{Accelerated Linear Algebra}
\newacronym{rmse}{RMSE}{Root Mean Square Error}
\newacronym{rti}{RTI}{Real-Time-Iteration}
\newcommand{\equref}[1]{Equation~\eqref{#1}}

\newtheorem{assumption}{Assumption}
\newtheorem{theorem}{Theorem}
\begin{document}
\maketitle
\thispagestyle{empty}
\pagestyle{empty}

\begin{abstract}
The control of high-dimensional systems, such as soft robots, requires models that faithfully capture complex dynamics while remaining computationally tractable. This work presents a framework that integrates \gls{gnn}-based dynamics models with structure-exploiting Model Predictive Control to enable real-time control of high-dimensional systems. By representing the system as a graph with localized interactions, the \gls{gnn} preserves sparsity, while a tailored condensing algorithm eliminates state variables from the control problem, ensuring efficient computation. The complexity of our condensing algorithm scales linearly with the number of system nodes, and leverages \gls{gpu} parallelization to achieve real-time performance. The proposed approach is validated in simulation and experimentally on a physical soft robotic trunk. Results show that our method scales to systems with up to 1,000 nodes at 100~Hz in closed-loop, and demonstrates real-time reference tracking on hardware with sub-centimeter accuracy, outperforming baselines by 63.6\%. Finally, we show the capability of our method to achieve effective full-body obstacle avoidance.
\end{abstract}

\vspace{0.2cm}
\noindent {\small {\bf Website}: \href{https://gnn-mpc.github.io/}{https://gnn-mpc.github.io/}}
\section{Introduction}

Robotic systems with high \glspl{dof} offer exceptional versatility and adaptability. Continuum soft robots exemplify this potential whereby their compliant and elastic structures naturally conform to complex surfaces and objects. This intrinsic compliance not only enhances physical robustness but also makes soft robots particularly well-suited for operation in delicate, constrained environments. Applications include safe locomotion, manipulation, and medical procedures~\cite{Rus2015}.

However, the features that provide these advantages also pose significant challenges for modeling and control. Accurately capturing their nonlinear dynamics and deformations demands complex, high-dimensional models. Existing approaches attempt to address this by either employing physics-based models~\cite{webster_design_2010} or data-driven methods~\cite{bruder_modeling_2019, alora2025discovering}; yet, they face an inherent tradeoff between model fidelity and computational tractability. Moreover, most modeling approaches restrict control to a subset of the state space for computational tractability, thereby restricting the ability to impose constraints or optimize performance across an extended set of points along a robot's structure. Ultimately, the full potential of soft-robot versatility and adaptability hinges on the effectiveness of the underlying control methods.

% Concurrently, \glspl{gnn} have emerged as powerful models for representing complex system dynamics. Previous work has demonstrated that \glspl{gnn} can act as efficient, data-driven physics engines, achieving substantial speedups over traditional numerical simulations~\cite{battaglia_interaction_2016}. \glspl{gnn} define neural networks on graph elements and a convolution operator, allowing them to propagate information between nodes and edges, using the structural and relational patterns inherent in high-dimensional robotic platforms. In combination with \gls{mpc}~\cite{rawlings2020model}, which provides a powerful and widely adopted framework by formulating control as a finite-horizon constrained optimization problem, \glspl{gnn} have the potential to enable real-time optimal control of high-dimensional systems. However, integration of \glspl{gnn} into \gls{mpc} frameworks remains largely unexplored, and existing methods often struggle to scale to the dimensionality present in complex robotic systems. In particular, \glspl{gnn} preserve the full state dimension, which leads to high solver times for large-scale problems.

Concurrently, \acrfullpl{gnn} have emerged as powerful, data-driven models for representing complex dynamical systems, achieving substantial speedups over traditional numerical simulations~\cite{battaglia_interaction_2016}. \glspl{gnn} operate on graph elements and naturally encode relational patterns inherent in high-dimensional robotic platforms, mirroring their sparse physical structure. In combination with \gls{mpc}~\cite{rawlings2020model}, they have the potential to enable real-time optimal control at scale. However, integrating \glspl{gnn} into \gls{mpc} remains largely underexplored, and current formulations do not leverage their sparse structure, leading to prohibitive solve times for large-scale problems.

\subsubsection*{Contributions}
\begin{figure}[!t]
\centerline{\includegraphics[width=0.95\columnwidth]{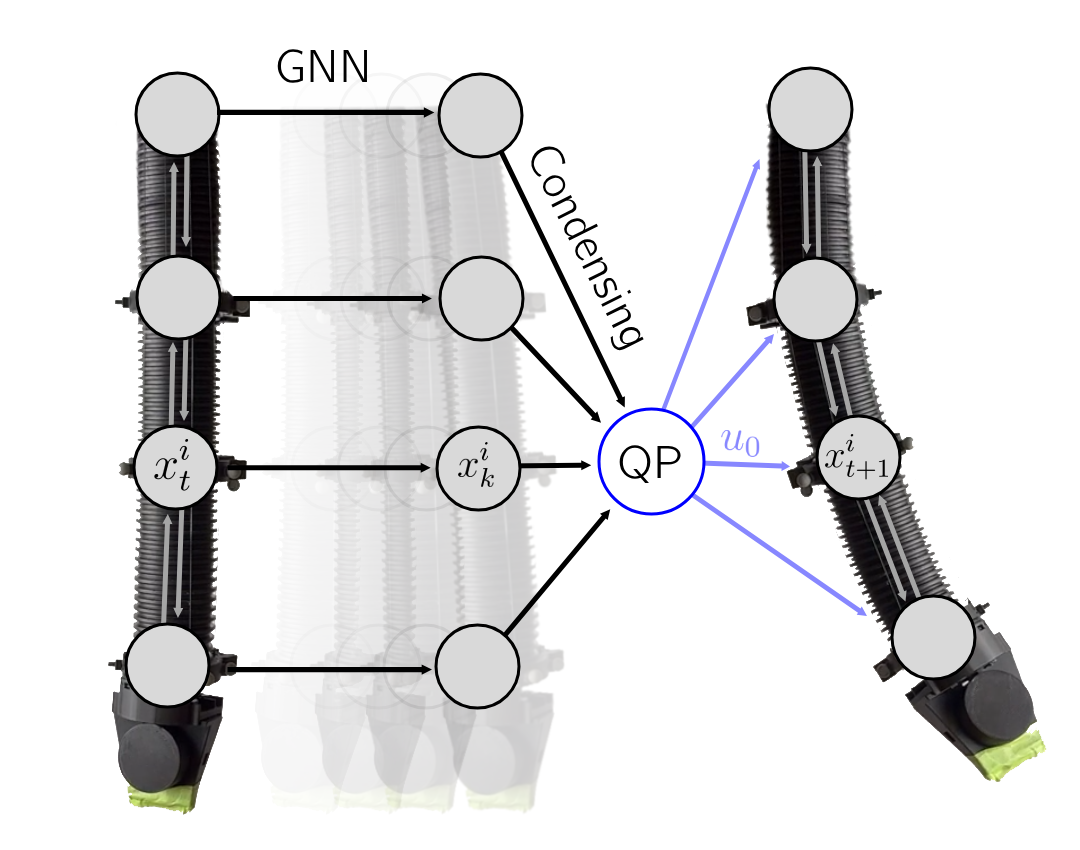}}
\caption{A soft robotic trunk is modeled as a graph with nodes as discrete segments with state $x^i_t$ at time $t$ and edges as physical interactions. A \gls{gnn} computes the linearized forward dynamics $x^i_k$, which are condensed into a Quadratic Program (QP) that only depends on the input variables $u_k$. The QP is then solved, and the first input $u_0$ is applied in a receding-horizon fashion.}
\label{fig:header}
\end{figure}

We propose a framework that, under the assumption of localized interactions, models dynamical systems as graphs and uses \glspl{gnn} to learn structured, scalable representations of their dynamics. To enable real-time optimal control, we adapt existing \textit{condensing} algorithms from~\cite{frison_algorithms} to eliminate the state variables from the resulting \gls{ocp}, thus reducing its dimensionality without compromising model fidelity. This is achieved by leveraging the localized sparsity structure of the linearized dynamics equations provided by the \gls{gnn}, leading to linear scaling with respect to the number of graph nodes in the modeled system, provided that each node interacts with a bounded number of neighbors. We summarize our list of contributions below:
\begin{enumerate}
    \item \textbf{\gls{gnn} dynamics with structure-aware condensing for \gls{mpc}:} We enable relational modeling of system dynamics and preserve structural sparsity, which we leverage to design a condensing algorithm with linear scaling properties in system size for a bounded number of neighbors. Figure~\ref{fig:header} provides an intuitive overview of the approach.
    \item \textbf{Highly efficient implementation:} We leverage parallel operations on the \gls{gpu} to achieve near-constant computation times for systems with up to 1,000 nodes at 100~Hz control frequency.
    \item \textbf{Experimental validation:} We demonstrate reduced trajectory tracking error on a physical soft robotic trunk by 63.6\%, and showcase direct control over all observable system states.
\end{enumerate}

\subsubsection*{Outline}
We first provide a literature review (Sec. \ref{sec:related_work}), followed by a theoretical background on \glspl{gnn} and condensing (Sec. \ref{sec:background}). We then provide a formal problem statement (Sec. \ref{sec:problem_formulation}) and detail the proposed \gls{gnn}-\gls{mpc} framework (Sec. \ref{sec:gnn_mpc}). Finally, we provide results from simulation and hardware experiments (Sec. \ref{sec:experiments}) and present our conclusions (Sec. \ref{sec:conclusion}).

\subsubsection*{Notation}
The set of real numbers and integers are represented by $\mathbb{R}$ and $\mathbb{Z}$, respectively. We denote the set of integers in the interval $[a,b]$ as $\mathbb{Z}_{[a,b]}$. For $Q \succeq 0$, we define the weighted squared seminorm $\|x\|_Q^2 \coloneqq x^\top Q x$.

\section{Related Work}
\label{sec:related_work}
Direct modeling of soft robots typically relies on continuum mechanics and multibody dynamics. While 3D solid mechanics and the \gls{fem} provide accurate representations~\cite{mengaldo_concise_2022}, solving the resulting differential equations is computationally expensive, and accurate characterization of material properties remains challenging. To reduce computational cost, reduced-order models such as the piecewise-constant curvature model~\cite{webster_design_2010,della_santina_dynamic_2018} approximate a soft robot through constant-curvature segments, at the expense of failing to capture more complex dynamics. In addition, Koopman operator theory enables the construction of linear models for nonlinear systems by lifting the dynamics into a higher-dimensional space~\cite{bruder_modeling_2019}. However, exact finite-dimensional linear representations of most physical systems are generally not available~\cite{brunton_koopman_2016}, and prediction performance depends on the quality of the finite-dimensional approximation. Model reduction can be further achieved through \glspl{ssm}~\cite{haller_nonlinear_2016}, which define low-dimensional attracting invariant manifolds that approximate the dominant nonlinear behaviors of high-dimensional systems. Recent work has shown that dynamics can be learned and controlled efficiently on these manifolds using data-driven approaches~\cite{alora2025discovering}. However, the use of \glspl{ssm} often imposes stringent assumptions, such as moderate control magnitudes and asymptotic stability of the system's origin. Instead, we propose a complementary method that relaxes these assumptions and represents high-dimensional systems as sparse networks of interacting elements.

Beyond structured physics-based models, neural networks provide an alternative modeling approach for dynamical systems. Feedforward networks can learn linearized state-space models from observed dynamics~\cite{gillespie_learning_2018}, facilitating real-time control while lessening the associated modeling effort. Nevertheless, such neural networks might experience difficulties in generalizing the system's dynamics due to the absence of inductive biases, i.e. structural assumptions and prior knowledge which are reflected in the model architecture and the learning process. In contrast, \glsentrylong{gnn}s incorporate relational structure through local message-passing mechanisms to capture interactions between system components~\cite{battaglia_interaction_2016,bronstein_geometric_2021}. Further developments have shown that \glspl{gnn} can outperform standard convolutional networks in mesh-based simulations, fluid dynamics, and forward dynamics prediction~\cite{pfaff_learning_2021}. \glspl{gnn} have also been applied to online planning, including unconstrained trajectory optimization via gradient descent~\cite{sanchez-gonzalez_graph_nodate,li_learning_2019}, and have also shown robustness under partial observability~\cite{li_propagation_2019}. Applications in robotics include soft robotic hands~\cite{almeida_sensorimotor_2021}, where the relational inductive bias of \glspl{gnn} enables reasoning over physical interactions more effectively than unstructured neural networks. \glspl{gnn} have also demonstrated their value in policy learning~\cite{gammelli_graph_2021, gammelli_graph_2022}, showing that \gls{gnn}-based policies exhibit greater transferability, generalization, and scalability in reinforcement learning tasks than non-relational approaches. Despite this growing body of work, the use of \glspl{gnn} for efficient closed-loop optimal control of high-dimensional systems remains largely unexplored. Our work addresses this gap by combining \glspl{gnn} with \gls{mpc} for real-time, high-dimensional control.

A major limitation in using expressive models such as \glspl{gnn} within \gls{mpc} lies in the computational complexity of solving the resulting \gls{ocp} using Newton-type methods, such as \gls{sqp}. Most sparsity-exploiting \gls{ipm} \gls{qp} solvers suffer from cubic scaling with respect to the state dimension~\cite{frison_hpipm_2020}. To address this, various structure-exploiting algorithms have been developed, among which condensing stands out as a particularly relevant approach for our problem~\cite{frison_algorithms}. Condensing removes state variables from the \gls{ocp} at the cost of an additional preprocessing step~\cite{frison_hpipm_2020,frison_efficient_2016,jerez_sparse_2012}. While it preserves problem structure and reduces solver times, its computational overhead scales unfavorably with system dimensionality. In this work, we leverage the inherent sparse structure of the \glspl{gnn} to develop a condensing algorithm which supports computationally tractable \gls{mpc} for high-dimensional systems.

%Note: Neton's method is second-order because the error decreases O(e^2) to previous iteration error. Not because it uses Hessian. IPM solvers are second-order since they apply Newton's method on the KKT system (which includes the derivative of the Lagrangian), not because they leverage any Hessian directly, but as a consequence of applying a derivative on the KKT system. Gradient descent is a first order method.

% ADMM converges sublinearly (O(1/k) or O(1/k^2), k is the iteration index) or linearly at best. IPM solvers converge quadratically
\section{Background}
\label{sec:background}

This section provides a brief overview of \glspl{gnn} in \ref{subsec:gnns} and condensing for \gls{mpc} in \ref{subsec:condensing}, which form the foundation of our proposed approach.

\subsection{Graph Neural Networks}
\label{subsec:gnns}
\glsentrylong{gnn} generalize neural architectures to graph-structured data. At their core, \glspl{gnn} exploit the relational structure in the data by reusing a set of local learnable functions, typically defined on nodes and edges, and applying them across the entire graph through convolution-like operators. This design enforces permutation equivariance and, using the symmetries inherent in graphs, \glspl{gnn} have achieved remarkable success across a wide range of disciplines~\cite{battaglia_interaction_2016, pfaff_learning_2021, sanchez-gonzalez_learning_2020}. To formalize this notion, we represent the network as a graph $\mathcal{G}=(\mathcal{V}, \mathcal{E})$, where $\mathcal{V}$ and $\mathcal{E}$ denote the set of nodes or vertices and directed edges, respectively, where each node $v_i \in \mathcal{V}$ encodes a local state $x^i \in \mathbb{R}^{\bar n_x}$ and each edge $(v_i,v_j) \in \mathcal{E}$ encodes an interaction $e^{ij} \in \mathbb{R}^{n_e}$ from node $v_j$ to $v_i$. Various \gls{gnn} architectures exist, typically classified into convolution, attention, and message-passing architectures~\cite{bronstein_geometric_2021}.

In the message-passing framework~\cite{battaglia_interaction_2016}, each node attribute $x^i_k$ at layer $k$ is updated by aggregating information from its neighbors as
\begin{align}
x^i_{k+1} = \phi \Big(x^i_k, \bigoplus_{j \in \mathcal{N}_i} \psi(x^i_k, x^j_k, e^{ij}_k) \Big),
\end{align}
where $\psi$ and $\phi$ are learnable functions, $\oplus$ is a permutation-invariant aggregation operator (e.g., sum or mean pooling), and $\mathcal{N}_i \coloneqq \{v_j \in \mathcal{V} \mid (v_j, v_i) \in \mathcal{E}\}$ denotes the neighborhood of node $i\in\mathcal{M}$, where $\mathcal{M} \coloneqq \mathbb{Z}_{[1,M]}$  is the set of node indices with $M = |\mathcal{V}|$ nodes. This formulation mirrors physical laws, learning local node functions that aggregate neighboring interactions. This motivates our approach for modeling high-dimensional dynamics with \glspl{gnn}, where forces are exchanged between system components. 

\subsection{Condensing in \gls{mpc}}
\label{subsec:condensing}
We consider a linear time-varying system with linear constraints and a quadratic cost function defined over a finite-horizon $N$\footnote{Section \ref{sec:gnn_mpc} describes how the nonlinear \gls{ocp} is solved with \gls{sqp} by a linear approximation.}, leading to the following \gls{ocp} formulation:
\begin{subequations}
\begin{align}
\min_{x_k, u_k} \quad 
& \sum_{k=0}^{N-1} \|x_k\|_{Q_k}^2 + \|u_k\|_{R_k}^2 + q_k^T x_k + r_k^T u_k \\
& + \|x_N\|_{Q_N}^2 + q_N^T x_N
\label{eq:ocp_cost_norm} \\
\text{s.t.} \quad & x_{k+1} = A_k x_k + B_k u_k + c_k, \quad k \in \mathbb{Z}_{[0, N-1]}, \label{eq:ocp_dynamics} \\
& x_k \in \mathbb{X}_k = \{ x \mid C_k^x x \le d_k^x \}, \quad k \in \mathbb{Z}_{[0, N]},\\
& u_k \in \mathbb{U}_k = \{ u \mid C_k^u u \le d_k^u \}, \quad k \in \mathbb{Z}_{[0, N-1]},\\
& x_0 = \tilde x(t), \label{eq:ocp_initial_state}
\end{align}
\end{subequations}
where $Q_k \succeq 0 \in \mathbb{R}^{\bar n_x \times \bar n_x}$, $R_k \succeq 0 \in \mathbb{R}^{n_u \times n_u}$ are cost matrices and $q_k \in \mathbb{R}^{\bar n_x}$, $r_k \in \mathbb{R}^{n_u}$ represent the linear terms of the stage cost. \equref{eq:ocp_dynamics} describes the linear system dynamics, where the initial condition $x_0$ is set to $\tilde x(t)$, the measured state of the system at a given time $t$. The state and input constraints $ \mathbb{X}_k,  \mathbb{U}_k$ are time-varying linear half-space constraints. We define the augmented variables
\[
x = [x_0^\top, \dots, x_N^\top]^\top, \quad
u = [u_0^\top, \dots, u_{N-1}^\top]^\top,
\]
and the corresponding block-diagonal matrices and vectors
\(\bar{Q}, \bar{q}, \bar C_x, \bar d_x \) (over \(k = 0,\dots,N\)), and
\(\bar{R}, \bar{r}, \bar C_u, \bar d_u \) (over \(k = 0,\dots,N-1\))
as the stacked sequences of the stage matrices and vectors. The original problem can be reformulated by \textit{condensing}, i.e. eliminating state variables through forward substitution~\cite{frison_algorithms, frison_efficient_2016}. This reduces the \gls{ocp} into a \gls{qp} in the input sequence alone, with $N n_u$ optimization variables, i.e.,
\begin{align}
\min_{u}\, u^\top H u + g^\top u, \quad \text{s.t. } \tilde C u \leq \tilde d,
\end{align}
with
\begin{subequations} \label{eq:condensed_matrices}
\begin{align} 
H &= \Gamma_u^\top \bar{Q} \Gamma_u + \bar{R}, \quad
g = \Gamma_u^\top \bar{Q} \Gamma_x + \Gamma_u^\top \bar{q} + \bar{r},\\
\tilde{C} &= \begin{bmatrix} \bar{C}_u \\ \bar{C}_x \Gamma_u \end{bmatrix}, \quad 
\tilde{d} = \begin{bmatrix} \bar{d}_u \\ \bar{d}_x - \bar{C}_x \Gamma_x \end{bmatrix}.
\end{align}
\end{subequations}
The matrices $\Gamma_u \in \mathbb{R}^{N n_u \times N n_x}$ and $\Gamma_{x} \in \mathbb{R}^{N n_x}$ depend on the system dynamics matrices $A_k$, $B_k$, $c_k$, and are computed recursively, satisfying $x = \Gamma_u u + \Gamma_x$\footnote{See Section~\ref{subsec:condensing_for_high_dim_sys} for a detailed derivation.}. Condensing reduces the number of decision variables to be solved for, yet this benefit comes at the expense of at least $\mathcal{O}(n_x^2)$ additional floating point operations per solver iteration~\cite{frison_algorithms}.

\section{Problem Formulation}
\label{sec:problem_formulation}
We now introduce our assumptions on the system dynamics and formally define the control task.
\subsection{Local System Dynamics}
We consider a discrete-time nonlinear system which can be described as a composition of $M$ subsystems, where each subsystem $i \in \mathcal{M}$ has a local state ${x}^i \in \mathbb{R}^{\bar{n}_x}$ and receives a global external input $u_k \in \mathbb{R}^{n_u}$. The full system state is given by $x_t = [({x}^1_t)^\top, \dots, ({x}^M_t)^\top]^\top \in \mathbb{R}^{n_x}$ with $n_x = M \bar n_x$. Motivated by the typically uniform material properties throughout the continuum of a soft robot, we make the following assumption:
\begin{assumption} \label{ass:local_dynamics}
The forward dynamics of the overall state ${x}_{t+1} = {\mathcal{F}}({x}_t, {u}_t)$ can be expressed as
\begin{align} \label{eq:local_dynamics}
    {x}^i_{t+1} = f(x^i_t, \{{x}^j_t \}_{j \in \mathcal{N}_i}, u_t), \quad i \in \mathcal{M},
\end{align}  
where $\mathcal{N}_i \subseteq \mathcal{M}$ is the set of subsystems affecting subsystem~$i$, and ${f}: \mathbb{R}^{\bar{n}_x} \times \mathbb{R}^{|\mathcal{N}_i| \bar{n}_x} \times \mathbb{R}^{n_u} \to \mathbb{R}^{\bar{n}_x} $ 
is homogeneous across all subsystems and almost everywhere differentiable.
\end{assumption}
We further assume that each subsystem interacts with a small and bounded neighborhood, based on the local behavior of each material
point which depends on the immediate surroundings it is in contact with, i.e.,
\begin{assumption} \label{ass:small_neighborhood}
There exists a small bound $d$ such that
\begin{align}
   |\mathcal{N}_i| \leq d, \quad i \in \mathcal{M}.
\end{align}
\end{assumption}

Throughout, we consider systems where the overall state dimension of the system is much larger than the number of control inputs, and we consider that each subsystem interacts with a small neighborhood, i.e.,
\begin{align}
    n_u \ll n_x, \quad d \ll M.
\end{align}
%Note that continuum soft robots satisfy the previous assumptions, where we consider homogeneity such that a material's properties are identical at all points within the continuum and consider locality, where a material point's behavior is determined solely by its immediate surroundings. Further the number of actuators is typically much smaller than the number of states required to describe a soft robot's configuration.
Continuum soft robots naturally satisfy the assumptions outlined above. Moreover, in typical soft robotic systems, the number of actuators is significantly smaller than the number of states required to fully describe the robot’s configuration, motivating the use of condensing to solve a \gls{qp} in the input variables only.

\subsection{Control Task}
Our objective is the tracking of reference trajectories $\{x^{r}_t, u^{r}_t\}_{t\geq 0}$ across all subsystems $i \in \mathcal{M}$, subject to the state and input constraints that govern the overall system. At each time $t$, we solve the following \gls{ocp} with horizon $N$:
\begin{subequations} \label{eq:linear_gnn_mpc_ocp_explicit}
\begin{align}
    \min_{x_k, u_k} \quad & \sum_{i=1}^{M} \left (\sum_{k=0}^{N} \left( \|x^i_k - x^{i,r}_k\|^2_{Q_k^i} \right) \right )  + \sum_{k=0}^{N-1} \|u_k - u^r_k\|^2_{R_k}\\
    \text{s.t.} \quad & x^i_0 = \tilde x^i(t), \\
    %& x^i_k = A^i_k x^i_k + \sum_{j \in \mathcal{N}_i} A^{ij}_k x^j_k + B^i_k u_k + c^i_k,\\
    & x^i_{k+1} = f(x^i_k, \{x^j_k\}_{j \in \mathcal{N}_i}, u_k)\\
    & x^i_k \in \mathbb{X}^i_k, \quad x^i_N \in \mathbb{X}^i_N, \quad u_k \in \mathbb{U}_k, \\
    & k \in \mathbb{Z}_{[0, N-1]}, \quad i \in \mathcal{M},
\end{align}
\end{subequations}
where $\tilde x^i(t)$ is the measured state of subsystem $i$ at time $t$, $Q^i_k \in \mathbb{R}^{\bar n_x \times \bar n_x}, \, Q^i_k \succeq 0$ is the cost matrix applied to subsystem $i$ at stage $k$, $R_k \in \mathbb{R}^{n_u \times n_u}, \, R_k \succeq 0$ is the cost applied to the input at stage $k$,  $\mathbb{X}^i_k = \left\{ x \in \mathbb{R}^{\bar n_x} \mid C^{x,i}_k x \leq d^{x,i}_k \right\}$ is a convex set of half-space constraints defined for subsystem $i$ at time step $k$, and $\mathbb{U}_k = \left\{ u \in \mathbb{R}^{n_u} \mid C_k^u u \leq d^u_k \right\}$ defines the admissible control inputs at time step $k$.
\section{Graph Neural Model Predictive Control}
\label{sec:gnn_mpc}

We now introduce our proposed method for the tractable control of high-dimensional systems through \glspl{gnn} and condensing. We first detail the modeling of the system dynamics, and then derive a condensing method which yields linear scaling in the number of subsystems $M$ \footnote{Assuming a fixed horizon $N$, bounded neighborhood $d$, constant state dimension per node $\bar n_x$, and input dimension $n_u$.}.

\subsection{Modeling Dynamics with GNNs}

\subsubsection*{Model}
We represent the forward dynamics of the system as a second-order model, where each subsystem is characterized by its position and velocity, i.e. $x^i_k = [(p^i_k)^\top, (v^i_k)^\top]^\top$. We employ a \gls{gnn} based on the interaction network~\cite{battaglia_interaction_2016}, where nodes correspond to subsystems, edges encode their pairwise interactions, and we model a global input vector representing the control inputs that influence all subsystems simultaneously. We let the \gls{gnn} predict a velocity increment, which is integrated via a backward Euler scheme, such that the state $x^i_{k+1}$ is computed for each subsystem $i$ as
\begin{subequations}
\begin{align} \label{eq:forward_dynamics}
    p^i_{k+1} &= p^i_k + v^i_{k+1} \Delta t,\\
    v^i_{k+1} &= v^i_k + \phi\left(x^i_k, \sum_{j \in \mathcal{N}_i} \psi(e^{ij}_k), u_k\right),
\end{align}
\end{subequations}
where $\Delta t$ is the sampling period, $v$ denotes velocity in this context, and $\psi$ and $\phi$ are the message-passing and node-update functions, respectively. The edge feature $e^{ij}_k$ is computed as the relative state difference $e^{ij}_k = x^i_k - x^j_k$, enabling the network to capture interactions in local coordinates. The functions $\psi$ and $\phi$ represent nonlinear transformations that are implemented as \glspl{mlp} with \gls{relu} activations. Further, neighborhood aggregation is performed via summation, motivated by how forces accumulate in physical systems.

\subsubsection*{Dataset Generation}

The \gls{gnn} learns the dynamics from open-loop trajectories. We construct a dataset of $N_d$ elements composed of the measured states and applied control inputs at each time step, and the corresponding subsequent state:
\begin{align}
    \mathcal{D} = \{(x_t, u_t, x_{t+1})\}_{t=1,\dots,N_d}.
\end{align}
Measurements are collected at equidistant intervals $\Delta t$, equal to the intended sampling period of the \gls{mpc} controller.

\subsubsection*{Training}
Consequently, the dynamics are learned by minimizing the mean-squared-error loss function with $\ell_2$ regularization:
\begin{align}
    \mathcal{L}_{\theta}(\mathcal{D}) 
    \coloneqq \frac{1}{N_d} \sum_{t=1}^{N_d}
    \big\|x_{t+1} - \mathcal{F}_{\theta}(x_t, u_t) \big\|^2_O 
    + \lambda \|\theta\|^2,
\end{align}
where $\theta = (\theta_\phi, \theta_\psi)$ are the network parameters, $\mathcal{F}_{\theta}$ denotes the \gls{gnn} operating on the full graph $\mathcal{G}$, $O \succeq 0$ is a weighting matrix, and $\lambda$ is a regularization coefficient. We conduct a grid search over the hyperparameters to optimize predictive performance and train the models using the Adam optimizer~\cite{adam} with a learning rate scheduler based on patience.

\subsubsection*{Linearization}
To solve the resulting nonlinear \gls{mpc} problem with a Newton-type method, we require a linear approximation of the \gls{gnn} dynamics around a nominal trajectory $\{\hat{x}_k, \hat{u}_k\}$, which we obtain via automatic differentiation of each subsystem's local dynamics independently\footnote{At the non-differentiable point of the \gls{relu} activation ($x=0$), a subgradient of zero is used.}. The linearized dynamics of each subsystem $i$ at stage $k$ are given by
\begin{align} \label{eq:linearized_dynamics}
    x^i_{k+1} = A^{ii}_k x^i_k + \sum_{j \in \mathcal{N}_i} A^{ij}_k x^j_k + B^i_k u_k + c^i_k,
\end{align}
with $A^{ij}_k = \frac{\partial f^i_k}{\partial x^j_k}$,  $B^{i}_k = \frac{\partial f^i_k}{\partial u_k}$, and $f^i_k = f(x^i_k, \{x^j_k\}_{j \in \mathcal{N}_i}, u_k)$. The term $c^i_k$ is the resulting constant offset from linearization.

\subsection{Local-Scalable Condensing}
\label{subsec:condensing_for_high_dim_sys}
In this section, we present a tailored condensing algorithm for high-dimensional systems. Given the small neighborhood size $d$, the linearized dynamics of the overall system has a sparse structure, which is enforced by the \gls{gnn}. Therefore, we derive a condensing algorithm based on~\cite{frison_algorithms}\footnote{Various condensing algorithms exist in the literature. We adopt one that admits a local, per-subsystem decomposition.} that takes advantage of the local dynamics and their resulting sparsity, resulting in favorable scaling properties formalized in the following theorem.

\begin{theorem} \label{thm:linear_scaling}
    Let Assumptions~\ref{ass:local_dynamics} and \ref{ass:small_neighborhood} hold and consider the \gls{ocp} in \eqref{eq:linear_gnn_mpc_ocp_explicit}. Then, the condensing procedure in Section~\ref{subsec:condensing} scales linearly in the number of subsystems $M$, i.e. the computation of $\Gamma_u, \Gamma_x, H, g, \tilde C, \tilde d$ requires $\mathcal{O}(M)$ floating point operations and memory for fixed values of $\bar n_x, n_u, N, d$.
\end{theorem}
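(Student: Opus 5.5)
The strategy is to count sparsity. Because of Assumption~\ref{ass:local_dynamics}, the linearized dynamics \eqref{eq:linearized_dynamics} give a block-sparse $A_k$. In block-row $i$ it has nonzero $\bar n_x\times\bar n_x$ blocks only in columns $\{i\}\cup\mathcal N_i$. By Assumption~\ref{ass:small_neighborhood} that is at most $d+1$ blocks per row, so at most $(d+1)M$ blocks in total. I would then write the condensing recursion explicitly, block by block. Let $\Gamma_u^{k,l}$ be the block of $\Gamma_u$ that maps $u_l$ to $x_k$ ($l<k$). It satisfies
\begin{align*}
\Gamma_u^{k+1,l}=A_k\Gamma_u^{k,l}\ (l<k),\qquad \Gamma_u^{k+1,k}=B_k,
\end{align*}
and $\Gamma_x^{k+1}=A_k\Gamma_x^{k}+c_k$ with $\Gamma_x^0=\tilde x(t)$. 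Everything then decomposes per subsystem:
\begin{align*}
\Gamma_u^{i,k+1,l}=A_k^{ii}\Gamma_u^{i,k,l}+\sum_{j\in\mathcal N_i}A_k^{ij}\Gamma_u^{j,k,l},
\end{align*}
with $\Gamma_u^{i,k,l}\in\mathbb R^{\bar n_x\times n_u}$. Each such update costs $\mathcal O((d+1)\bar n_x^2 n_u)$ flops. There are $M$ subsystems and $\mathcal O(N^2)$ pairs $(k,l)$, so building $\Gamma_u$ costs $\mathcal O(MN^2 d\bar n_x^2 n_u)=\mathcal O(M)$ for fixed parameters. The same argument gives $\mathcal O(MNd\bar n_x^2)$ for $\Gamma_x$. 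For memory, $\Gamma_u$ has $N^2 n_x n_u/2=\mathcal O(M)$ entries and $\Gamma_x$ has $\mathcal O(M)$ entries. The key point to stress is that $\Gamma_u$ has only $n_u\ll n_x$ columns per stage, so it is dense but thin, and its size is already linear in $M$.

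Next I would treat $H$ and $g$. The cost in \eqref{eq:linear_gnn_mpc_ocp_explicit} is separable over subsystems, so $\bar Q$ is block diagonal with blocks $Q_k^i$, and the reference terms fold into $\bar q$. This gives
\begin{align*}
H^{l,m}=\sum_{k}\sum_{i=1}^M(\Gamma_u^{i,k,l})^\top Q_k^i\,\Gamma_u^{i,k,m}+\delta_{lm}R_l .
\end{align*}
Each of the $\mathcal O(N^2)$ blocks is a sum of $\mathcal O(NM)$ terms, each costing $\mathcal O(\bar n_x^2 n_u+\bar n_x n_u^2)$ flops. The total is $\mathcal O(N^3M)$ flops, while $H$ itself needs only $\mathcal O(N^2n_u^2)$ memory, independent of $M$. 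Likewise $g$ is a sum over $i$ of per-subsystem products, costing $\mathcal O(M)$. Along the way I would note that the Gram product $\Gamma_u^\top\bar Q\Gamma_u$ contracts over the large dimension $n_x$, which is why it is linear rather than quadratic in $M$.

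For the constraints, $\bar C_x$ is block diagonal with per-subsystem blocks $C_k^{x,i}$. I would assume, or argue as implicit in the per-subsystem definition of $\mathbb X_k^i$, that each block has a bounded number of rows. Then $\bar C_x\Gamma_u$ and $\bar d_x-\bar C_x\Gamma_x$ are computed row-block by row-block using only $\Gamma_u^{i,\cdot,\cdot}$ and $\Gamma_x^{i,\cdot}$. That costs $\mathcal O(M)$ flops and produces an $\mathcal O(M)$-row matrix with $Nn_u$ columns, so the memory is also $\mathcal O(M)$. The input blocks $\bar C_u,\bar d_u$ do not depend on $M$.

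I expect the main obstacle to be the propagation step. A naive reading suggests that $A_{k}\cdots A_{l+1}$ fills in, because neighborhoods grow over $N$ steps, and forming these products explicitly would cost $\mathcal O(n_x^2)$. The plan avoids this by never forming products of $A$ matrices. Instead, each sparse $A_k$ is applied to the thin matrix $\Gamma_u^{k,l}$, so every step is a sparse-matrix times thin-matrix product with cost proportional to $\mathrm{nnz}(A_k)\cdot n_u=\mathcal O(dM)$. A secondary point to make explicit is that the constants hidden in $\mathcal O(M)$ are polynomial in $N,d,\bar n_x,n_u$, which the statement treats as fixed.
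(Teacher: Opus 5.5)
Your proposal is correct and follows essentially the same route as the paper. The paper also uses a per-subsystem recursion for the block rows of $\Gamma_u$ and $\Gamma_x$ that only touches the neighbors in $\tilde{\mathcal N}_i$, sums the local contributions $(\Gamma_u^i)^\top \bar Q^i \Gamma_u^i$ and $g^i$ into $H$ and $g$, and stacks the per-subsystem constraint blocks $\bar C_x^i\Gamma_u^i$ and $\bar d_x^i-\bar C_x^i\Gamma_x^i$. Your explicit flop counts, your remark about never forming products of the $A_k$ matrices, and your assumption that each $C_k^{x,i}$ has a bounded number of rows make precise points that the paper's proof leaves implicit.
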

\begin{proof}
Consider the system dynamics in \equref{eq:linearized_dynamics} and a horizon $N=1$. We can equivalently describe the state evolution of subsystem $i$ as
\begin{align}
    x^i = \begin{bmatrix}
        x_0^i \\
       x^i_1\\
    \end{bmatrix} = \Gamma_u^i u_0 + \Gamma_x^i,
\end{align}
with
\begin{align}
    \Gamma_u^i = \begin{bmatrix}
        0 \\
        B^i_0\\
    \end{bmatrix}, \, 
    \Gamma_x^i = \begin{bmatrix}
        x_0^i \\
        \sum_{j \in \tilde{\mathcal N_i}} A^{ij}_{0} x^j_0 + c_0\\
    \end{bmatrix},
\end{align}
where the matrices $\Gamma_u^i$ and $\Gamma_x^i$ only depend on the local neighborhood $\tilde{\mathcal{N}}_i \coloneqq \mathcal{N}_i \cup \{i\}$. For larger horizons, each state $x^i_{k+1}$ is obtained recursively from the previous state $x^i_{k}$ based on \eqref{eq:linearized_dynamics}, and therefore we can compute $\Gamma_u^i$ and $\Gamma_x^i$ recursively as
\begin{align}
    \label{eq:sparse_gamma_i_xb}
    (\Gamma_{x}^i)_{n+1} &= \begin{bmatrix} (\Gamma_{x}^i)_{n} \\ \sum_{j \in \tilde{\mathcal N_i}} A^{ij}_{n}\mathcal{E}_{n}(\Gamma_{x}^j)_{n} + c^i_{n} \end{bmatrix},
\end{align}
where $(\Gamma_{x}^i)_{n} \in \mathbb{R}^{\bar n_x (n + 1)}$. Further, $\mathcal{E}_n = [0_{\bar n_x \times \bar n_x n} \; I_{ \bar n_x}]$ selects the last row of the matrix it operates on, and we denote the recursion index by the subscript $n = 1,\dots, N-1$. Similarly,
\begin{align}
    \label{eq:sparse_gamma_i_u}
    (\Gamma_{u}^i)_{n+1} &= 
    \begin{bmatrix} 
        (\Gamma_{u}^i)_{n} & \\ 
        \sum_{j \in \tilde{\mathcal N_i}} A^{ij}_{n} \mathcal{E}_{n} (\Gamma_u^j)_{n} & B^i_{n} \end{bmatrix},
\end{align}
where $(\Gamma_{u}^i)_{n} \in \mathbb{R}^{\bar n_x (n+1) \times n_u n}$, and the recursion terminates with $\Gamma_x^i = (\Gamma_x^i)_N, \, \Gamma_u^i = (\Gamma_u^i)_N$. Under Assumption~\ref{ass:small_neighborhood}, the computation of $\Gamma_{x}^i$ and $\Gamma_{u}^i$ depends only on $d$ neighboring nodes and is therefore independent of the overall system size $M$, which enables a distributed evaluation within each iteration.

Moreover, the cost function in \eqref{eq:linear_gnn_mpc_ocp_explicit} features separability for each subsystem $i$, i.e. the individual cost terms are aggregated, and we can compute the Hessian $H$ independently for each subsystem $i$ as
\begin{align} \label{eq:H}
    H = \sum_{i = 1}^{M} H^i + \bar{R}, \quad H^i = (\Gamma_u^i)^\top \bar Q^i \Gamma_u^i,
\end{align}
where the local Hessians $H^i$ are added to construct the full Hessian. Similarly, the gradient $g$ can be decomposed as
\begin{align} \label{eq:g}
    g = \sum_{i=1}^M g^i + \bar{r}, \quad g^i = \left( \Gamma_u^i \right)^\top \left(\bar{Q}^i \Gamma_{x}^i + \bar{q}^i \right),
\end{align}
where each subsystem contributes $g^i$. Therefore, the condensing procedure can be performed with $M$ distributed operations, where each subsystem computes its local contribution to the condensed \gls{qp}, which is aggregated in a final step. Finally, following the separability of the constraints in \eqref{eq:linear_gnn_mpc_ocp_explicit}, the latter can be processed independently and concatenated:
\begin{align}
\tilde{C} &= \begin{bmatrix} \bar{C}_u \\ \bar{C}_x^1 \Gamma_u^1 \\ \vdots \\ \bar{C}_x^M \Gamma_u^M \end{bmatrix}, \quad 
\tilde{d} = \begin{bmatrix} \bar{d}_u \\ \bar{d}_x^1 - \bar{C}_x^1 \Gamma_x^1 \\ \vdots \\ \bar{d}_x^M - \bar{C}_x^M \Gamma_x^M \end{bmatrix},
\end{align}
and we denote the corresponding entries of subsystem $i$ as $\tilde{C}^i, \tilde{d}^i$.
Therefore, the overall computational complexity grows linearly with the number of nodes in the system, i.e., it requires $\mathcal{O}(M)$ floating point operations for fixed values of $\bar n_x, n_u, N, d$. Since each subsystem stores only fixed-size quantities, the total memory requirement also scales $\mathcal{O}(M)$.
\end{proof}

The previous proof provides an efficient and distributed computation scheme to condense the quantities in \eqref{eq:condensed_matrices} by leveraging the sparse local graph dynamics. In addition, the computations are naturally suited for \gls{gpu} parallelization, since they are carried out for each node and combined efficiently through summation or stacking. This can significantly reduce solve times, as will be demonstrated in Section~\ref{sec:experiments}.

\subsection{\gls{gnn}-\gls{mpc} Algorithm}
%TODO SQP and linearized dynamics
The general procedure is summarized in Algorithm~\ref{alg:gnn-mpc}. At each time step, the dynamics are linearized around $\{\hat{x}_k, \hat{u}_k\}$ as part of an \gls{sqp} scheme, and the condensed \gls{qp} is constructed by aggregating the contributions from each subsystem. Consequently, it is then solved using an \gls{ipm} \gls{qp} solver, and the first control input is applied to the system. The linearization trajectory is updated with the shifted optimal solution and the process is repeated.

\begin{algorithm}[H]
\caption{\gls{gnn}-\gls{mpc}}
\label{alg:gnn-mpc}
\SetAlgoLined
\KwIn{$\{x_k^r, u_k^r\}$, $Q_k^i, R_k$, $N$, $\mathbb{X}^i_k, \mathbb{U}_k$, GNN}
\textbf{Initialize:} $\{\hat{x}_k, \hat{u}_k\} \gets \{x(t), 0\}$\; \{\textit{Set lin. trajectory}\}

\For{$t = 0, \Delta t, \dots$}{
    Set $x_0 \gets \tilde x(t)$\\
    \For{$k = 0,\dots, N-1$}{
        \For{$i = 1,\dots, M$}{
            Compute $A^{ij}_k,B^i_k,c^i_k$ around $\{\hat{x}_k, \hat{u}_k\}$\\ 
            for $j \in \mathcal{\tilde N}_i$\;\{\textit{Linearize GNN dynamics}\}
        }
    }
    \For{$i = 1,\dots, M$}{
        Compute $\Gamma^i_u, \Gamma^i_x, H^i, g^i, \tilde C^i, \tilde d^i$ \; \{\textit{Condense}\}
    }
    Compute $H, g, \tilde C, \tilde d$ \; \{\textit{Aggregate}\}

    $\min_{u} u^\top H u + g^\top u, \quad \text{s.t. } \tilde C u \leq \tilde d$ \; \{\textit{Solve \gls{qp}}\}

    $\{\hat{x}_k, \hat{u}_k\} \gets \{x_{k+1}^\star, u_{k+1}^\star\}$\; \{\textit{Update lin. trajectory}\}

    Apply $u_0^\star$ to system
}
\end{algorithm}
\section{Experimental Results}
\label{sec:experiments}
This section details the experimental validation of the proposed \gls{gnn}-\gls{mpc} algorithm. Following a description of the experimental setup, we present simulation and hardware results designed to quantify the model's open-loop prediction accuracy, assess its performance in closed-loop control, and demonstrate scalability, enabling new applications for high-dimensional systems.

\subsection{Setup}
We trained the message-passing \gls{gnn} in \textit{PyTorch Geometric}, and translated it to \textit{Flax} to enable \gls{jit} compilation and efficient batching. The condensing algorithm was implemented in \textit{JAX}, leveraging vectorized mapping and \textit{einsum}-based primitives for efficient \gls{gpu} execution via \gls{xla}. The resulting \gls{qp} is solved using the \gls{ipm} solver \textit{HPIPM}~\cite{frison_hpipm_2020} within a \gls{sqp} \gls{rti} scheme~\cite{sqp-rti}.

The experiments were conducted on an AMD Ryzen 9 7950X3D \gls{cpu} and an NVIDIA RTX 4090 \gls{gpu}. The simulation models a tendon-driven trunk robot in MuJoCo. Hardware experiments use the corresponding physical platform, tracked via an \textit{OptiTrack} motion capture system at 100~Hz and actuated by 6 XM540 Dynamixel motors. The software is implemented in ROS2.

\subsection{Simulation experiments}
This section first compares open-loop predictions of our approach with four baseline models, and then demonstrates the scalability of our method.
\subsubsection*{Open-loop Predictions}
We first evaluate the open-loop prediction accuracy of the learned \gls{gnn} dynamics model against a Koopman operator model~\cite{bruder_modeling_2019}, \gls{ssm} reduction with orthogonal projections~\cite{alora2025discovering}, \gls{ssm} with optimal linear projections~\cite{buurmeijer_taming_2025}, and a standard \gls{mlp} baseline. All models are trained on the same dataset distribution from the trunk simulator, consisting of quasi-periodic input trajectories with varying amplitudes and periods. Table~\ref{tab:benchmark-comparison} summarizes the average end effector \gls{rmse} on a one second horizon across 100 different trajectories. The \gls{gnn} achieves open-loop \gls{rmse} comparable to the Koopman method while surpassing all other baselines. As observed in \cite{alora2025discovering}, the Koopman operator performs better on certain simulation trajectories, supporting our results.

\begin{table}[h]
\caption{Average end-effector open-loop \gls{rmse} across models, reported as mean $\pm 1\sigma$ standard deviation.}
\centering
\begin{tabular}{l c}
\toprule
\textbf{Model} & \textbf{RMSE [m]} \\
\midrule
Koopman    & $(1.16 \pm 0.61) \times 10^{-3}$ \\
GNN        & $(1.38 \pm 0.76) \times 10^{-3}$ \\
SSM (Opt.)    & $(2.88 \pm 1.25) \times 10^{-3}$ \\
MLP        & $(3.59 \pm 1.67) \times 10^{-3}$ \\
SSM (Orth.)   & $(3.67 \pm 2.22) \times 10^{-3}$ \\
\bottomrule
\end{tabular}
\label{tab:benchmark-comparison}
\end{table}

\subsubsection*{Scalability}
Further, we evaluate the scalability of the proposed \gls{gnn}-\gls{mpc} framework by varying the number of segments in the simulated trunk robot from $1$ to $1000$. The \gls{mpc} problem is solved over a horizon of $N=20$ with input constraints and state constraints on the end effector, constituting a fixed subset of nodes as in Theorem~\ref{thm:linear_scaling}. Figure~\ref{fig:scalability} shows the average solve time over 100 runs for each robot configuration. The \gls{gnn}-\gls{mpc} exhibits sub-linear scaling with respect to the number of segments considered, enabling real-time control at 100~Hz.

\begin{figure}[h!]
\vspace{2mm}
\centerline{\includegraphics[width=0.95\columnwidth]{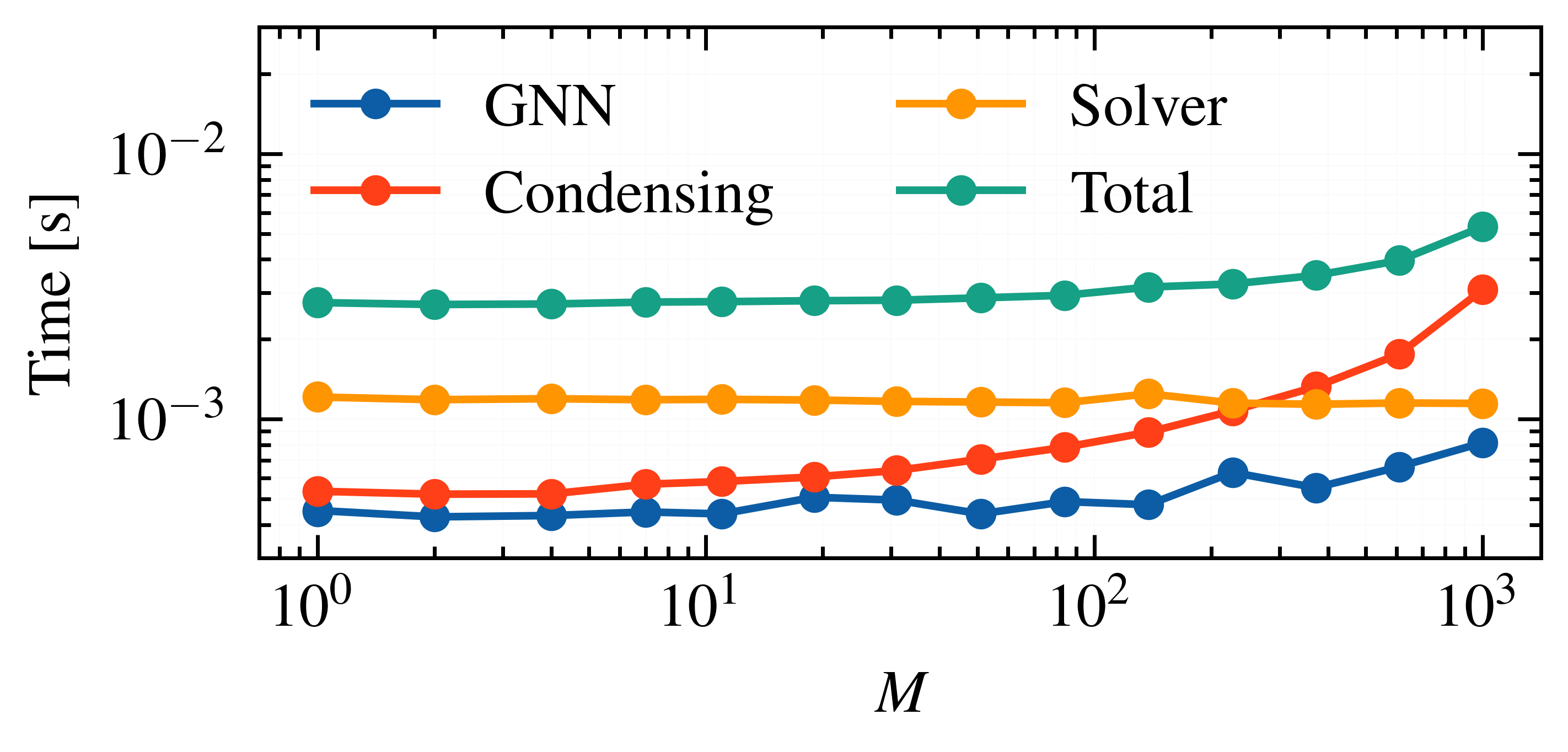}}
\caption{Scalability of the proposed \gls{gnn}-\gls{mpc} framework with respect to the number of subsystems $M$. The computation times of the relevant stages in Algorithm~\ref{alg:gnn-mpc} are shown in log-log scale.}
\label{fig:scalability}
\end{figure}

\begin{figure*}[!t]
    \vspace{2mm}
    \centering
    % First row
    \begin{subfigure}{0.25\textwidth}
        \centering
        \includegraphics[width=\linewidth]{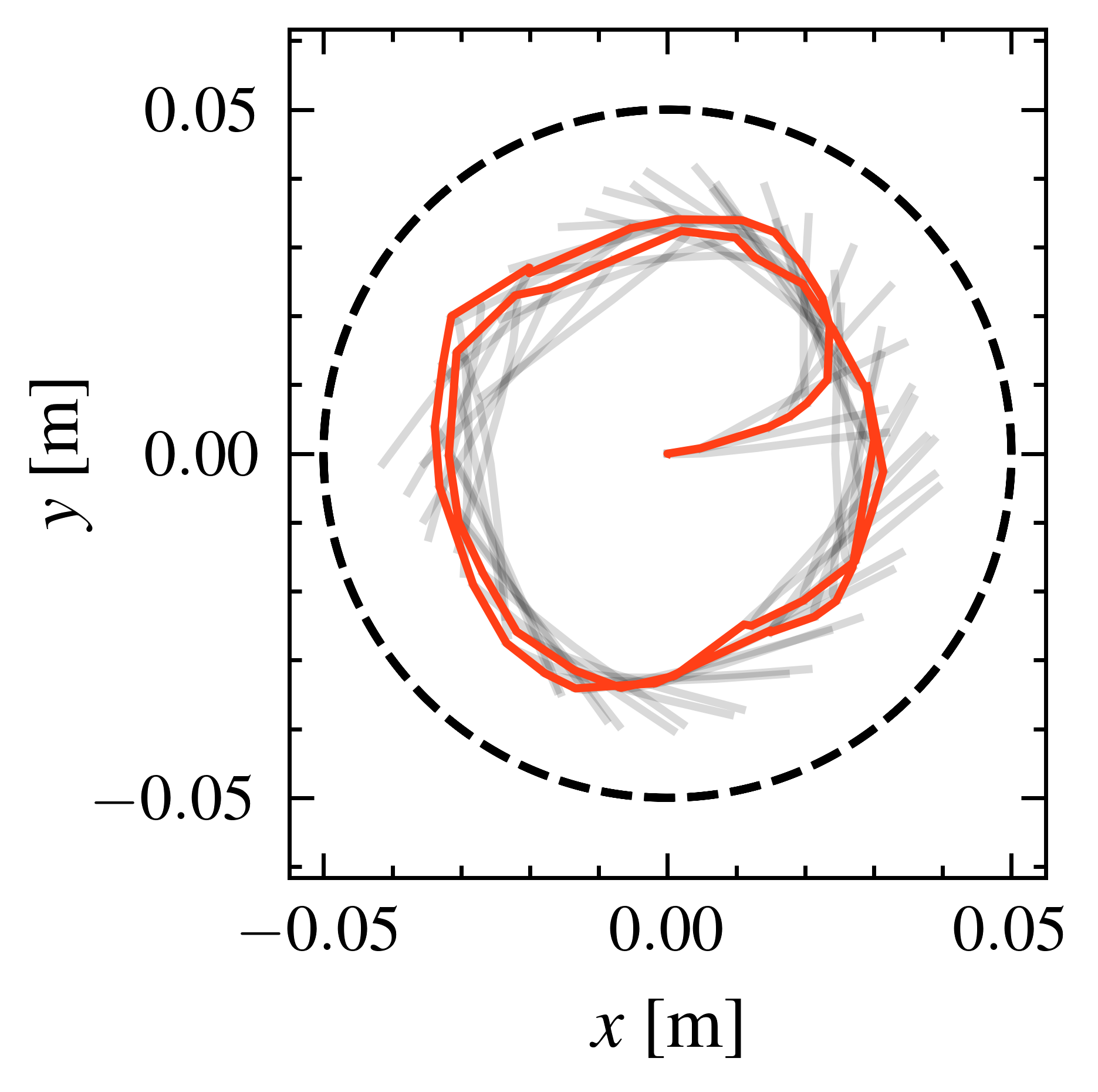}
        \label{fig:koopman-circle}
    \end{subfigure}
    \hspace{0.75cm}
    \begin{subfigure}{0.25\textwidth}
        \centering
        \includegraphics[width=\linewidth]{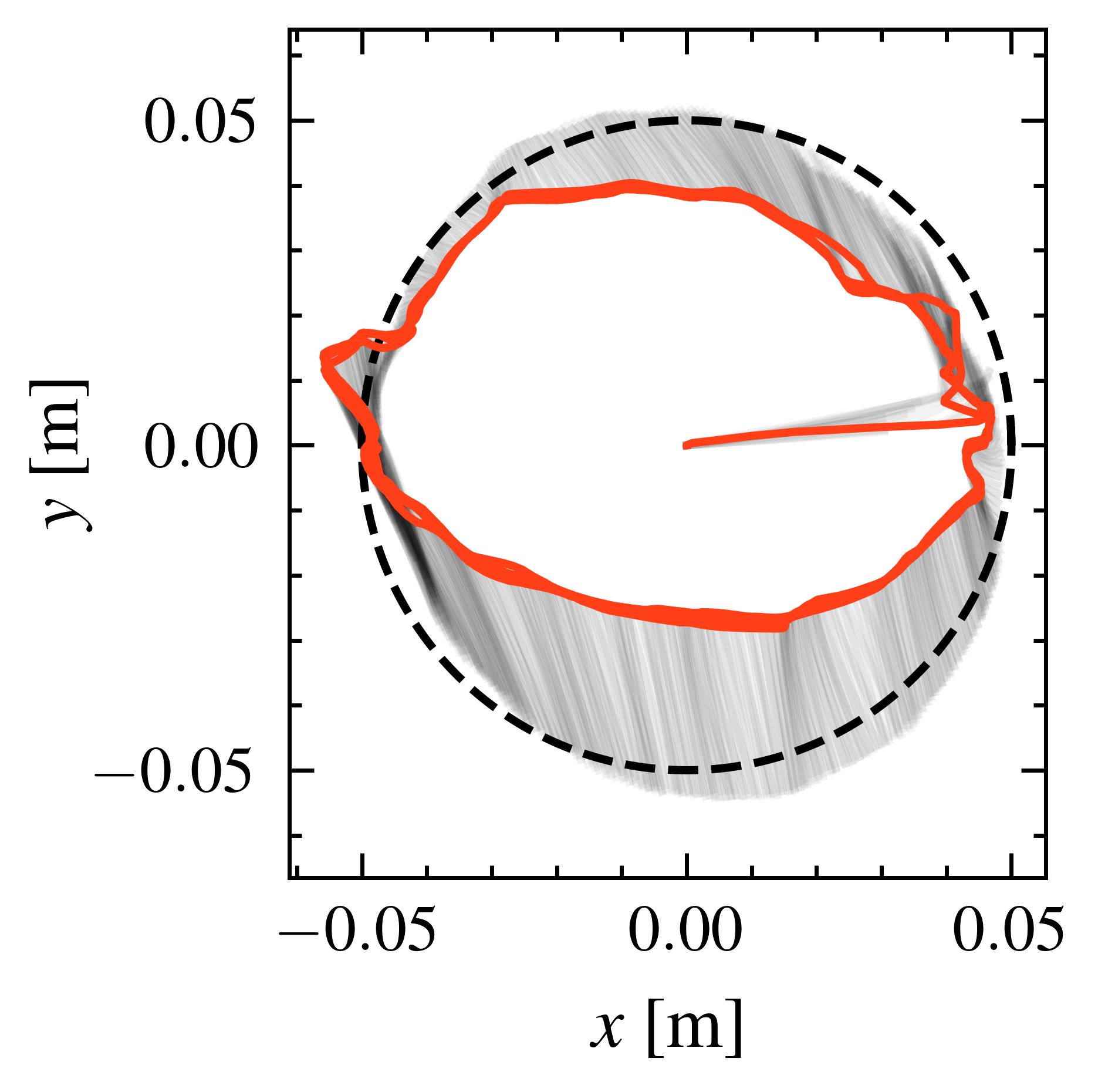}
        \label{fig:ssm-circle}
    \end{subfigure}
    \hspace{0.75cm}
    \begin{subfigure}{0.25\textwidth}
        \centering
        \includegraphics[width=\linewidth]{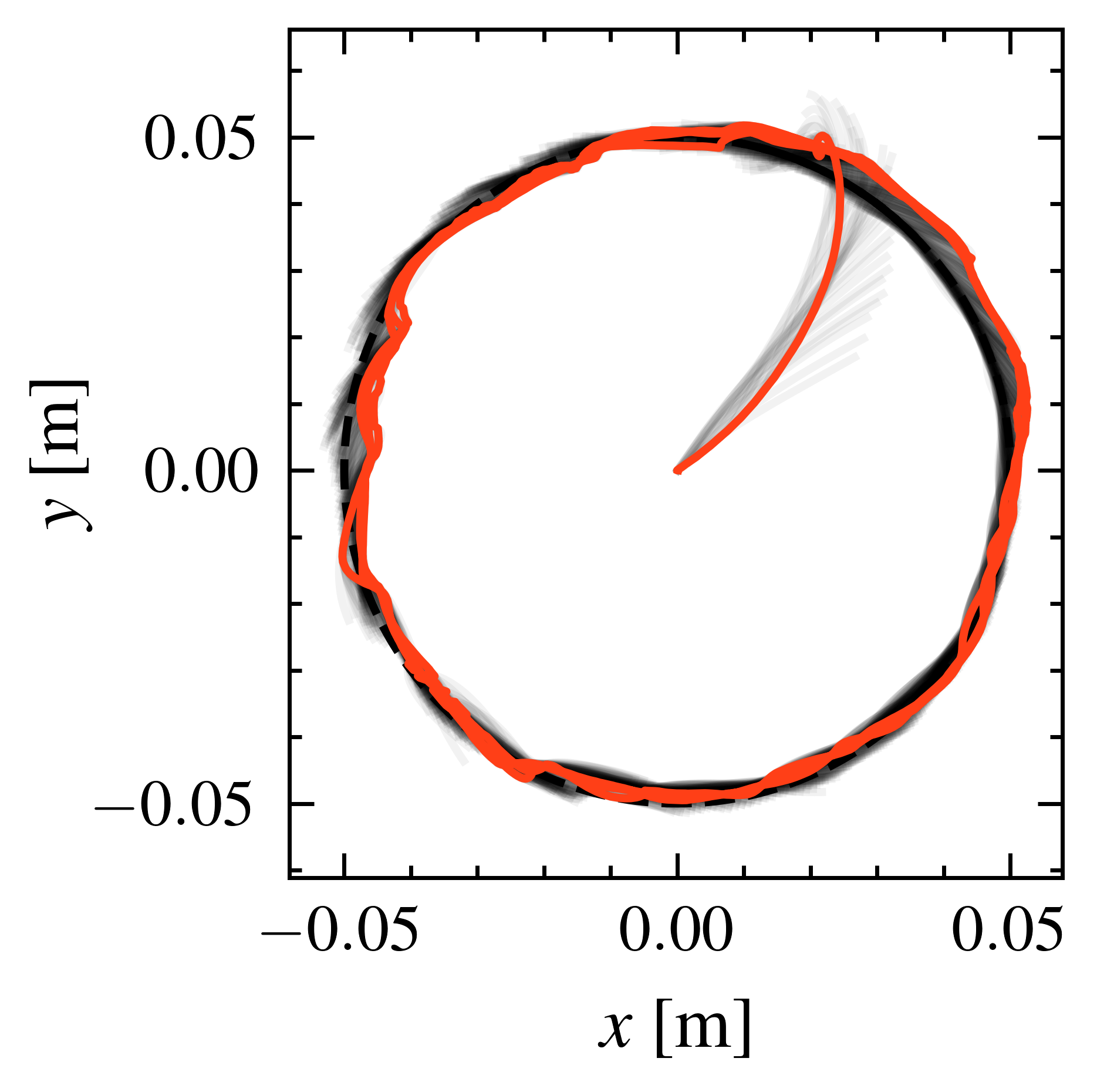}
        \label{fig:gnn-circle}
    \end{subfigure}

    \vspace{0.0em} % vertical space between rows

    % Second row
    \begin{subfigure}{0.25\textwidth}
        \centering
        \includegraphics[width=\linewidth]{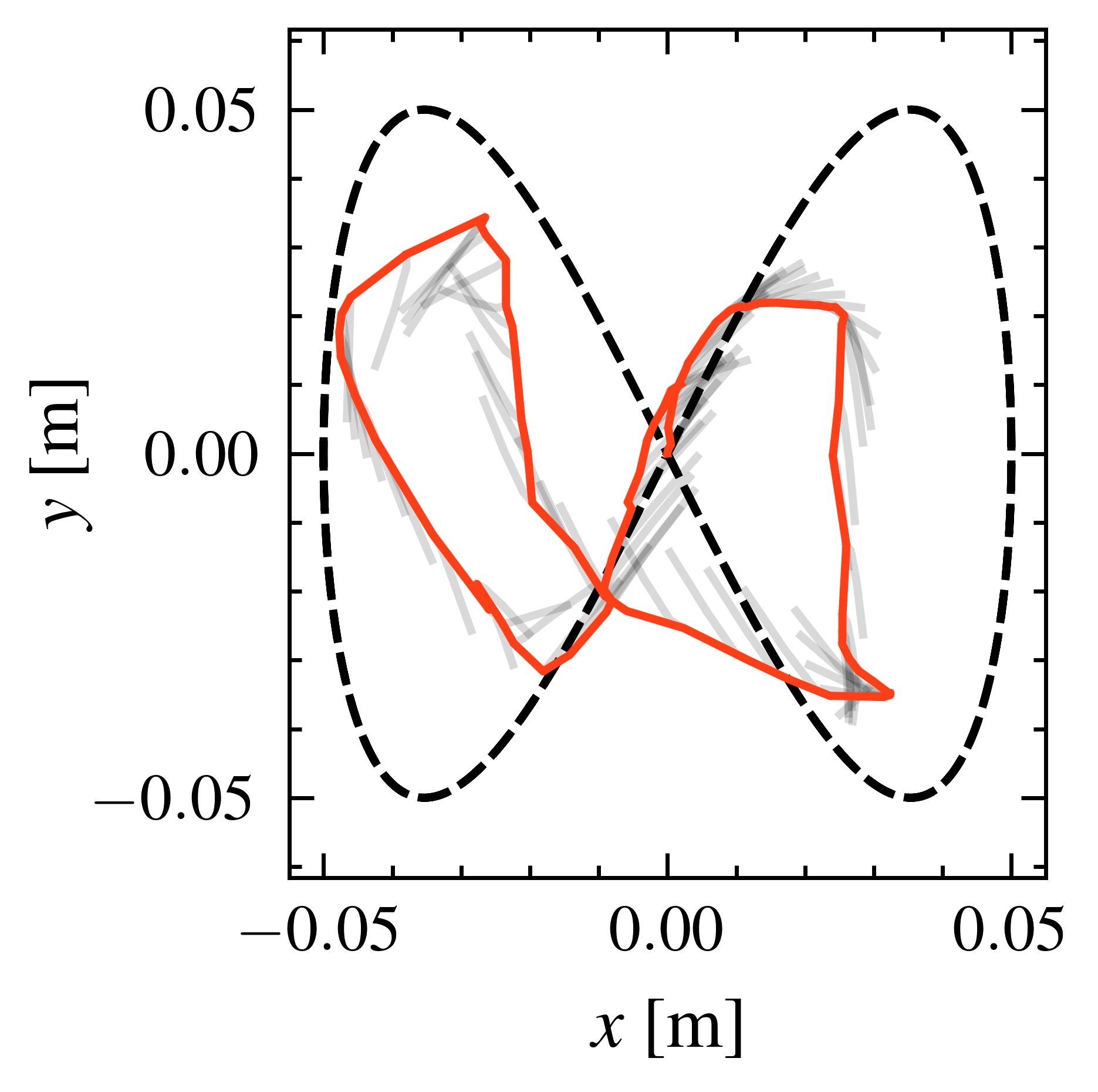}
        \caption{Koopman}
        \label{fig:koopman-eight}
    \end{subfigure}
    \hspace{0.75cm}
    \begin{subfigure}{0.25\textwidth}
        \centering
        \includegraphics[width=\linewidth]{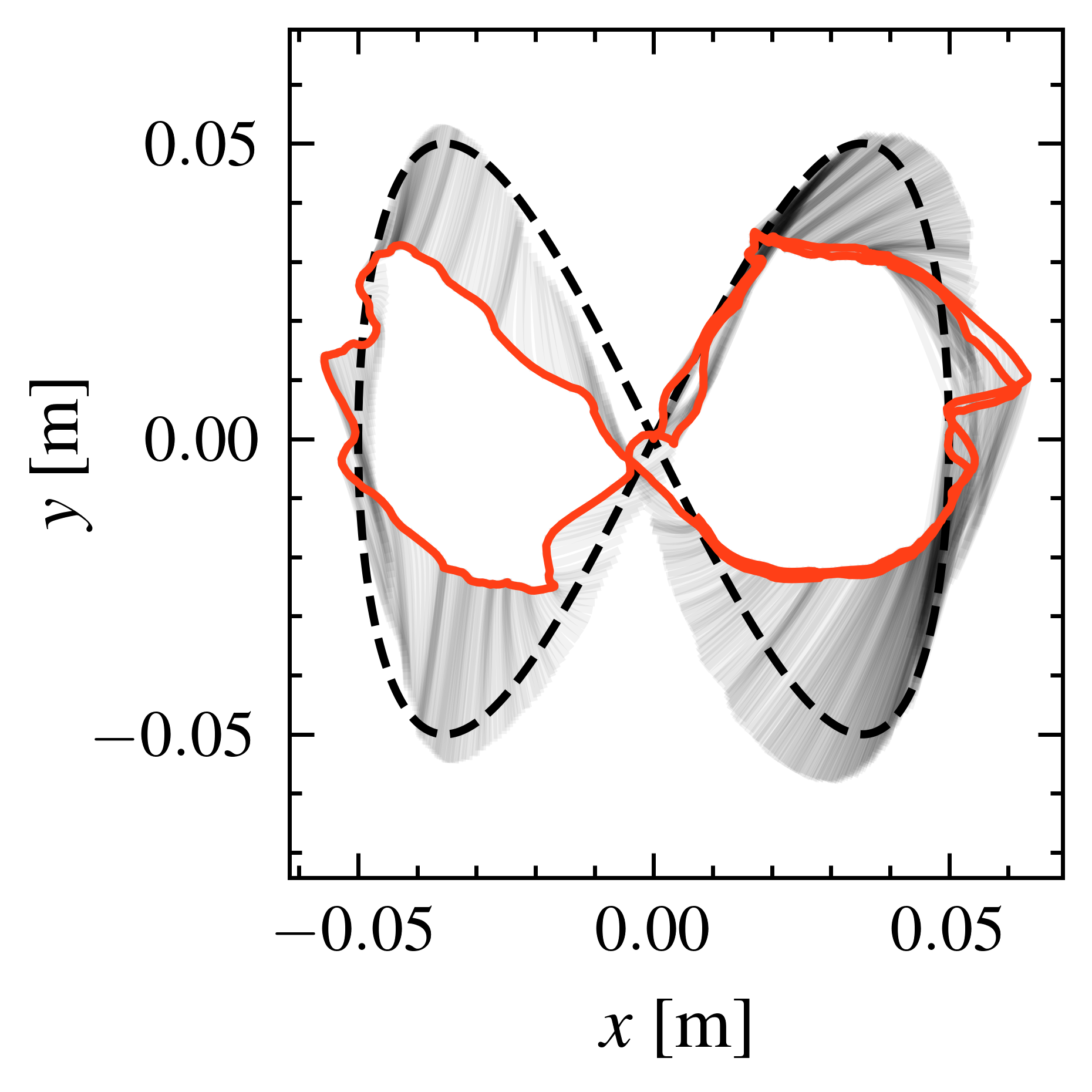}
        \caption{SSM-Orth}
        \label{fig:ssm-eight}
    \end{subfigure}
    \hspace{0.75cm}
    \begin{subfigure}{0.25\textwidth}
        \centering
        \includegraphics[width=\linewidth]{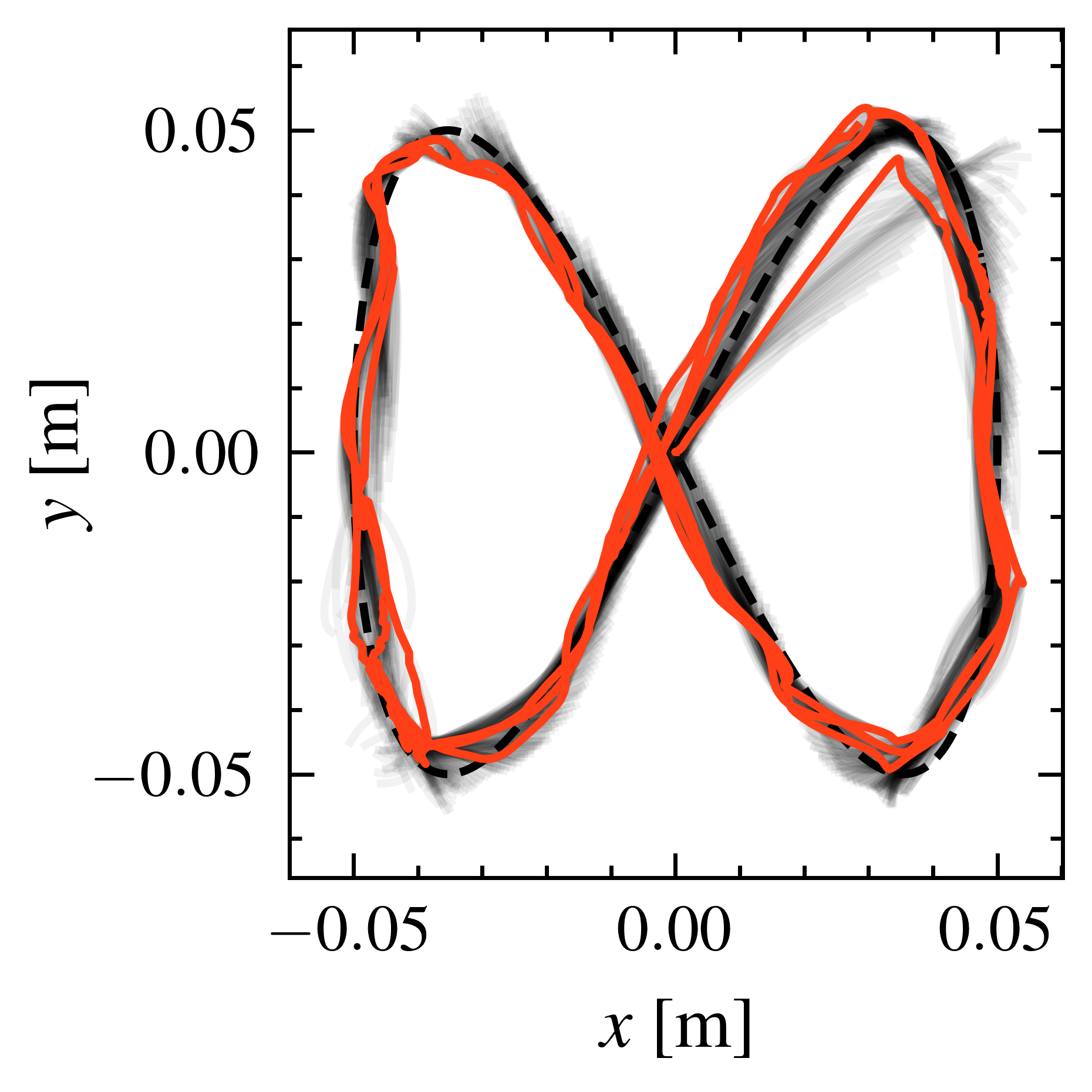}
        \caption{GNN}
        \label{fig:gnn-eight}
    \end{subfigure}

    \caption{Reference tracking of a circle and figure-eight using (a) Koopman Operator modeling, (b) \gls{ssm} orthogonal reduction and (c) our \gls{gnn}-\gls{mpc} approach. The reference is shown in dashed black, while the closed-loop trajectories are shown in red. We additionally provide the \gls{mpc} plans for each time step in grey.}
    \label{fig:tracking-experiments}
\end{figure*}

We note that the linearization step represents the smallest contribution to the overall solve time, indicating that efficient solution algorithms of the \gls{ocp} are crucial for high-dimensional systems. The \gls{qp} solve time remains almost constant, since the number of inputs and constraints remains unchanged, while condensing captures most of the computational burden, which is effectively parallelized on the \gls{gpu}.

\subsection{Hardware experiments}
We validate the proposed \gls{gnn}-\gls{mpc} framework on a physical soft trunk robot. Two tasks are considered: (i) trajectory tracking of a figure-eight and a circle, and (ii) obstacle avoidance. The \gls{mpc} runs at 100~Hz with a prediction horizon of $N=20$. The robot is represented by $M=4$ nodes, whose positions are measured via motion capture, each interacting with $d=2$ neighbors. Each node is described by $\bar n_x = 6$ states, corresponding to its position and velocity in $\mathbb{R}^3$. The \gls{gnn} functions $\psi$ and $\phi$ are parameterized by $4$ and $8$ hidden layers with $50$ and $250$ hidden units, respectively. The cost weights are tuned separately for each model to ensure the best closed-loop performance. Figure~\ref{fig:tracking-experiments} illustrates the closed-loop end effector trajectories for \gls{mpc} involving the Koopman method, \gls{ssm} orthogonal reduction and our approach. We note that \gls{ssm} orthogonal offered better closed-loop results than the optimal linear projections variant on hardware. The \gls{gnn}-\gls{mpc} achieves average tracking errors of $6.25$~mm (figure-eight) and $3.67$~mm (circle), outperforming the baselines by at least $63.6\%$. For the Koopman method, we obtained average errors of $23.76$~mm (figure-eight) and $19.60$~mm (circle), and $17.15$~mm (figure-eight) and $15.77$~mm (circle) for the \gls{ssm} baseline.
These results highlight the ability of the \gls{gnn} to capture hardware-specific effects such as input nonlinearities and tendon slackness. Furthermore, the average solve times are $332$~ms (Koopman), $2.9$~ms (\gls{ssm}), and $9.11$~ms (ours).

\begin{figure}[h!]
\centerline{\includegraphics[width=0.7\columnwidth]{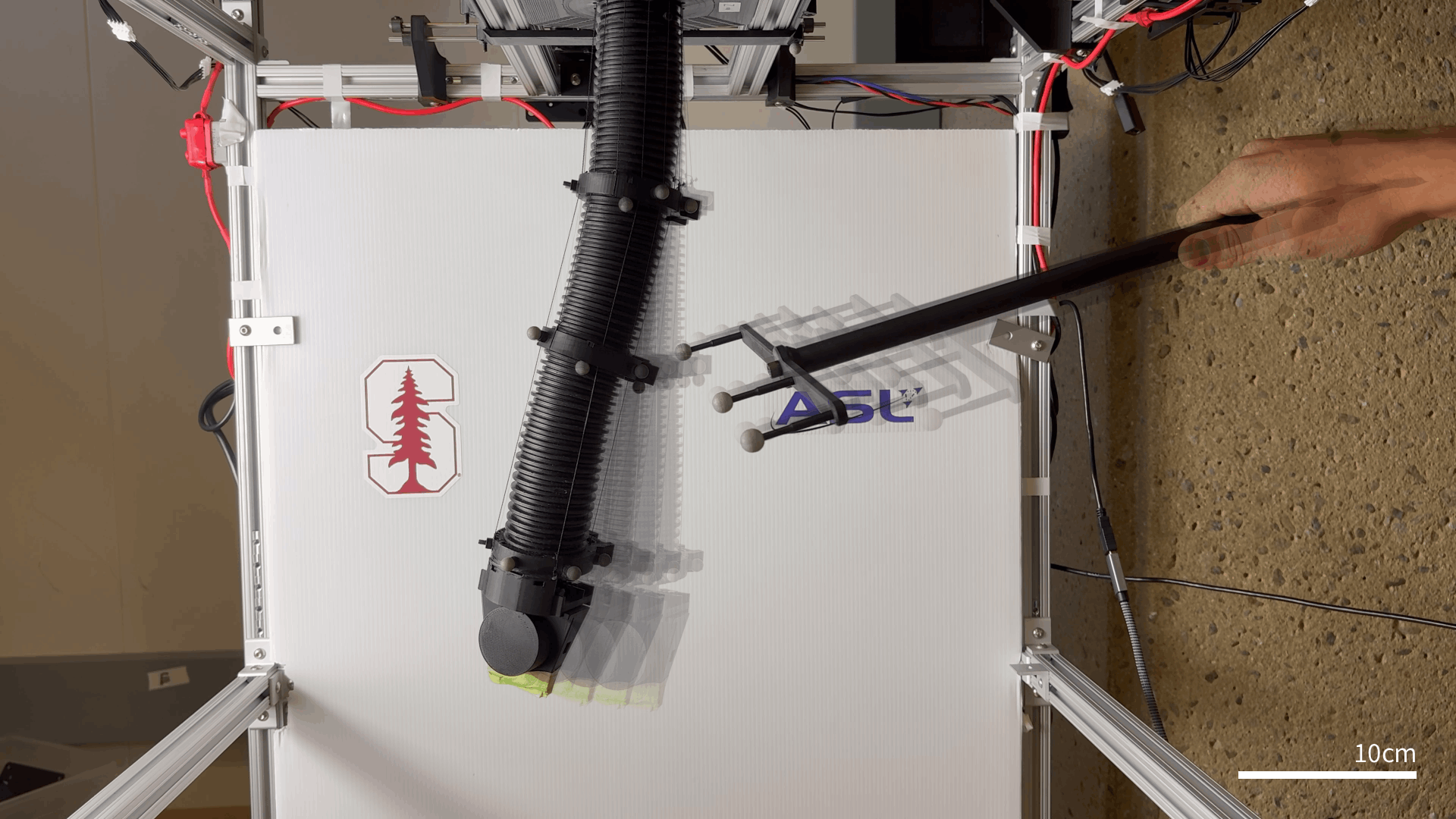}}
\caption{Deflection of the trunk as the obstacle approaches the middle node. The end effector attempts to remain as close as possible to its resting position.}
\label{fig:gnn-obstacle-avoidance-upper-node}
\end{figure}

\begin{figure}[h!]
\centerline{\includegraphics[width=0.7\columnwidth]{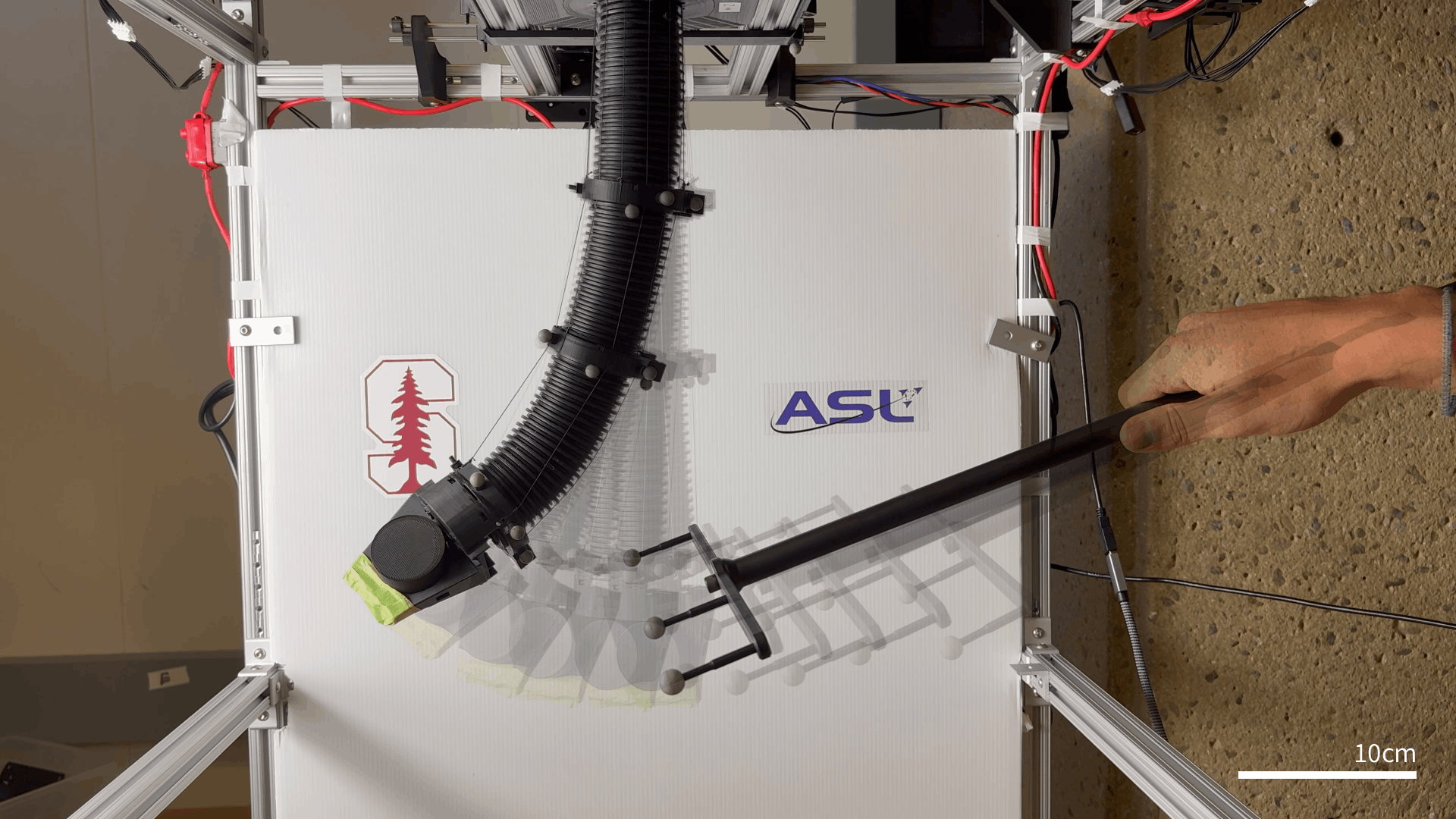}}
\caption{Deflection of the trunk as the obstacle approaches the end effector node. The latter moves further away from the obstacle compared to the previous scenario.}
\label{fig:gnn-obstacle-avoidance-lower-node}
\end{figure}

To address system resonances induced by control delays, we apply a low-pass filter with time-constant $T_f = 0.24s$ to the control inputs generated by the \gls{gnn}. The induced delay is compensated by shifting the reference trajectory by $19$ steps. Further, all considered models require careful tuning and may exhibit unstable behavior depending on the training data and regularization parameters. The baseline models are particularly sensitive to the choice of training trajectories, showing stable predictions for quasi-periodic trajectories in case of \gls{ssm} and random-walk trajectories for the Koopman method. In contrast, the \gls{gnn} demonstrates the ability to learn generalized dynamics from both types, yielding the best results on random-walk trajectories on hardware. We use $50$ training trajectories of 20s each for all models and apply no filtering other than the removal of outliers caused by the motion capture system. Finally, we note that the \gls{gnn} still requires hyperparameter tuning through grid search to ensure smooth and reliable closed-loop control.

Finally, we present the benefit of full-node control by enabling collision avoidance across the entire soft robot through individual barrier terms for each of the four measured nodes in the cost function. Figure~\ref{fig:gnn-obstacle-avoidance-upper-node} shows the deflection of the trunk as the obstacle approaches the middle node, while Figure~\ref{fig:gnn-obstacle-avoidance-lower-node} shows the deformation when the obstacle approaches the end effector node. The \gls{gnn}-\gls{mpc} successfully avoids collisions while keeping other nodes close to their resting position, demonstrating the capability to handle complex interactions with the environment.

\section{Conclusion}
\label{sec:conclusion}

We presented a framework for real-time optimal control of high-dimensional systems that integrates Graph Neural Networks within Model Predictive Control. By modeling dynamics as spatial graphs under the assumption of localized interactions, our approach preserves structural sparsity and enables efficient control through a custom condensing algorithm that scales linearly with the number of system nodes, further accelerated by \gls{gpu} parallelization. In simulation, the \gls{gnn} sustains closed-loop control at 100~Hz for systems with up to 1,000 nodes. While the \gls{gnn} shows slightly lower prediction accuracy than the Koopman operator in simulation, it achieves $63.6$\% better tracking performance on the considered hardware platform. The high dimensionality of the Koopman lifting~\cite{bruder_modeling_2019} further leads to larger solve times, hindering high-frequency real-time control.

Moreover, our \gls{gnn} model is capable of learning the system dynamics from $50$ arbitrary random-walk open-loop trajectories, streamlining the training process compared to the multi-step identification procedures required by the Koopman and \gls{ssm} baselines. Our method further enables applications where control of the entire system state is explicitly considered.

Avenues for future work include extending the framework to time-varying graph structures, improving computational efficiency by addressing constraint handling bottlenecks with advanced solvers, exploring applications beyond soft robotics, and establishing formal stability and robustness guarantees for the closed-loop system.
\section{Acknowledgments}
\label{sec:acknowledgment}

The authors warmly thank Paul Wolff, Carmen Amo Alonso and Lukas Schroth. This work was partially supported by Toyota Motor Engineering \& Manufacturing North America (TEMA). The views expressed in this paper are solely those of the authors and do not necessarily reflect those of the supporting entity.

%TODO: \addtolength{\textheight}{-12cm}

\balance
\bibliographystyle{ieeetr}
\bibliography{references} 

\end{document}